  \title
        {Enhancing magic sets with an application to ontological reasoning}
  \author[M. Alviano et al.]
         {MARIO ALVIANO, NICOLA LEONE, PIERFRANCESCO VELTRI, JESSICA ZANGARI\\
         Department of Mathematics and Computer Science, University of Calabria, Italy \\
         \email{\{alviano,leone,veltri,zangari\}@mat.unical.it}}
\lstdefinelanguage{asp}{
    morekeywords={not, count, sum, min, max},
    morecomment=[l]{\%},
    breakatwhitespace=true,
    captionpos=b,
    numbers=left,
    numbersep=5pt,
    numberblanklines=false,
    countblanklines=false,
    commentstyle=\color{gray},
    frame=bt, framexbottommargin=5pt, framextopmargin=5pt,
    aboveskip=5pt, belowskip=5pt,
    abovecaptionskip=10pt
}
\newtheorem{definition}{Definition}[section]
\newtheorem{example}{Example}[section]
\def\la{\ensuremath{\textnormal{ :-- }}}
\newcommand{\tuple}[1]{\ensuremath{\left \langle #1 \right \rangle }}
\newcommand{\magicRules}[1]{\ensuremath{R^{\it mgc}}\xspace}
\newcommand{\modifiedRules}[1]{\ensuremath{R^{\it mod}}\xspace}
\begin{document}

\label{firstpage}

\maketitle

\begin{abstract}
Magic sets are a Datalog to Datalog rewriting technique to optimize query answering.
The rewritten program focuses on a portion of the stable model(s) of the input program which is sufficient to answer the given query.
However, the rewriting may introduce new recursive definitions, which can involve even negation and aggregations, and may slow down program evaluation.
This paper  enhances the magic set technique by preventing the creation of (new) recursive definitions in the rewritten program. It turns out that the new version of magic sets is closed for Datalog programs with stratified negation and aggregations, which is very convenient to obtain efficient computation of the stable model of the rewritten program. Moreover, the rewritten program is further optimized by the elimination of subsumed rules and by the efficient handling of the cases where binding propagation is lost.
The research was stimulated by a challenge on the exploitation of Datalog/\textsc{dlv} for efficient reasoning on large ontologies. All proposed techniques have been hence implemented in the \textsc{dlv} system, and tested for ontological reasoning,  confirming their effectiveness.

\medskip
\noindent
\emph{Under consideration for publication in Theory and Practice of Logic Programming.}
\end{abstract}

\begin{keywords}
Datalog;
query answering;
magic sets;
nonmonotonic reasoning;
aggregations.
\end{keywords}

\section{Introduction}

Datalog is a rule based language for knowledge representation and reasoning suitable for a natural declaration of inductive definitions and ontological reasoning \cite{DBLP:conf/aaai/EiterOSTX12}.
Several extensions to the core language of Datalog exist, among them default negation \cite{DBLP:journals/jlp/Gelder89,DBLP:journals/jacm/GelderRS91,DBLP:journals/ngc/GelfondL91} and aggregates \cite{DBLP:journals/ai/SimonsNS02,DBLP:journals/tplp/PelovDB07,DBLP:journals/ai/LiuPST10,DBLP:conf/aaaiss/BartholomewLM11,DBLP:journals/tocl/Ferraris11,DBLP:journals/tplp/GelfondZ14}.
Restrictions on the use of these linguistic constructs lead to preserve the existence and uniqueness of the stable model associated with a knowledge base;
specifically, such restrictions essentially enforce a stratification on the definitions involving negation and aggregates \cite{DBLP:journals/ai/FaberPL11}.
The semantics of the resulting language reached a broad consensus in the knowledge representation and reasoning community, as in fact the notions of \emph{perfect model}, \emph{well-founded model}, and \emph{stable model} coincide for stratified programs \cite{DBLP:journals/jar/Przymusinski89,DBLP:journals/jacm/GelderRS91}.

The stable model of a Datalog program can be constructed bottom-up, starting from facts in the program, and deriving new atoms from rules whose bodies become true.
Negation and aggregates are handled by partitioning the input program into different strata, so that the lowest stratum does not contain negation and aggregates, and each other stratum only negates and aggregates over predicates of lower strata.
Such a bottom-up procedure is very efficient for producing the stable model, but it may be by itself inefficient for query answering.
In fact, the stable model may contain atoms that are not relevant to answer the given query, and therefore constitute a source of inefficiency for query answering.
In contrast, top-down procedures start from the query, and consider bodies of the rules defining the query predicate as subqueries.
Hence, the computation focuses on a portion of the stable model that is relevant to answer the query.

The magic sets algorithm is a top-down rewriting of the input program that restricts the range of the object variables so that only the portion of the stable model that is relevant to answer the query is materialized by a bottom-up evaluation of the rewritten program \cite{DBLP:conf/pods/BancilhonMSU86,DBLP:journals/jlp/BeeriR91,DBLP:journals/jlp/BalbinPRM91,DBLP:conf/pods/StuckeyS94,DBLP:journals/ai/AlvianoFGL12}.
In a nutshell, magic sets introduce rules defining additional atoms, called \emph{magic atoms}, whose intent is to identify relevant atoms to answer the input query, and these magic atoms are added in the bodies of the original rules to restrict the range of the object variables.
Without going into much details, consider a typical recursive definition such as the ancestor relation:
\begin{asp}
  ancestor(X,Y) :- parent(X,Y).
  ancestor(X,Y) :- parent(X,Z), ancestor(Z,Y).
\end{asp}
and a query \lstinline|ancestor(mario,Y)| asking for the ancestors of \lstinline|mario|.
The extension of the \lstinline|ancestor| relation is likely to contain several tuples that are not linked to \lstinline|mario|, and are therefore irrelevant to answer the given query.
To eliminate such a source of inefficiency, magic sets start with \lstinline|m#ancestor#bf(mario)|, the \emph{query seed}, which encodes the relevance of the instances of \lstinline|ancestor(mario,Y)|;
note that the first argument of \lstinline|ancestor| is bound to constant \lstinline|mario|, while the second argument is associated with a free variable, hence the predicate \lstinline|m#ancestor#bf| (first argument bound, second argument free).
After that, magic sets modify the rules defining the intentional predicate \lstinline|ancestor|, and introduce \emph{magic rules} for every occurrence of intentional predicates in the bodies of the modified rules.
The rewritten program is the following:
\begin{asp}
  m#ancestor#bf(mario).
  ancestor(X,Y)    :- m#ancestor#bf(X), parent(X,Y).
  ancestor(X,Y)    :- m#ancestor#bf(X), parent(X,Z), ancestor(Z,Y).
  m#ancestor#bf(Z) :- m#ancestor#bf(X), parent(X,Z).
\end{asp}
and limits the extension of \lstinline|ancestor/2| to the tuples that are relevant to answer the given query.

Magic sets are sound and complete for the language considered in this paper (actually, for a broader language; \citeNP{DBLP:conf/lpnmr/AlvianoGL11}).
However, while on the one hand they are designed to inhibit the source of inefficiency associated with irrelevant atoms, on the other hand they may introduce different sources of inefficiencies, and also produce programs not satisfying the stratification of negation and aggregates.
This paper identifies three of such sources of inefficiency, and propose strategies for their inhibition.
Specifically, the major source of inefficiency is represented by the possible introduction of recursive definitions in the rewritten program.

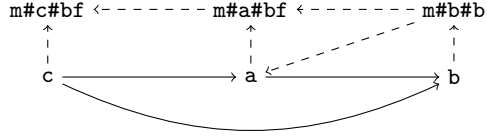
\begin{figure}
    \figrule

    \tikzstyle{node} = [text centered]
    \tikzstyle{line} = [draw, solid]
    \tikzstyle{arc} = [draw, solid, ->]
    \tikzstyle{darc} = [draw, dashed, ->]

    \begin{tikzpicture}[scale=.9]
        \node at (0,0) [node](c) {\lstinline|c|};
        \node at (3,0) [node](a) {\lstinline|a|};
        \node at (6,0) [node](b) {\lstinline|b|};

        \node at (0,1) [node](mc) {\lstinline|m#c#bf|};
        \node at (3,1) [node](ma) {\lstinline|m#a#bf|};
        \node at (6,1) [node](mb) {\lstinline|m#b#b|};

        \draw (c) edge[arc] (a);
        \draw (c) edge[arc, out=-25, in=25-180] (b);
        \draw (a) edge[arc] (b);
        
        \draw (c) edge[darc] (mc);
        \draw (a) edge[darc] (ma);
        \draw (b) edge[darc] (mb);
        
        \draw (ma) edge[darc] (mc);
        \draw (mb) edge[darc] (ma);
        \draw (mb) edge[darc] (a);
        
    \end{tikzpicture}
    
    \caption{
        Dependency graphs (defined in Section~\ref{sec:background}) of programs $\Pi_1,\Pi_2,\Pi_3$ (solid arcs) and programs $\Pi_1',\Pi_2',\Pi_3'$ (solid and dashed arcs) from Example~\ref{ex:cycles}.
        All arcs have weight 0, possibly with the exception of the arc connecting \lstinline|a| and \lstinline|b|, which has weight 1 for programs $\Pi_2,\Pi_2',\Pi_3,$ and $\Pi_3'$.
    }\label{fig:pos-cycle}
    \figrule
\end{figure}

\begin{example}[Magic sets may introduce recursive definitions]\label{ex:cycles}
Consider a query \lstinline|c(0,Y)| for the following program $\Pi_1$:
\begin{asp}
  $r_{1\phantom{0}}:\quad$ a(X,Y) :- edb(X,Y), b(X).
  $r_{2\phantom{0}}:\quad$ b(X)   :- edb(X,Y).
  $r_{3\phantom{0}}:\quad$ c(X,Y) :- a(X,Y), b(Y).
\end{asp}
and a possible outcome $\Pi_1'$ of the magic sets rewriting:
\begin{asp}
  $r_{4\phantom{0}}:\quad$ m#c#bf(0).
  $r_{5\phantom{0}}:\quad$ m#a#bf(X) :- m#c#bf(X).
  $r_{6\phantom{0}}:\quad$ m#b#b (Y) :- m#c#bf(X), a(X,Y).
  $r_{7\phantom{0}}:\quad$ m#b#b (X) :- m#a#bf(X), edb(X,Y).
  $r_{8\phantom{0}}:\quad$ a(X,Y) :- m#a#bf(X), edb(X,Y), b(X).
  $r_{9\phantom{0}}:\quad$ b(X)   :- m#b#b (X), edb(X,Y).
  $r_{10}:\quad$ c(X,Y) :- m#c#bf(X), a(X,Y), b(Y).
\end{asp}
In particular, rule $r_6$ is produced while processing rule $r_3$ with variable \lstinline|X| bound from the head atom, and considering variable \lstinline|Y| bound by atom \lstinline|a(X,Y)|.
This is a common strategy, as there is no reason to consider an atom  \lstinline|b(y)| if no instance of \lstinline|a(X,y)| is first computed.
However, as shown in Figure~\ref{fig:pos-cycle}, while all definitions in $\Pi_1$ are non-recursive, $\Pi_1'$ has recursive definitions for \lstinline|a/2| and \lstinline|b/1|, which may deteriorate the performance of the subsequent bottom-up evaluation.
Following the same strategy, for a program $\Pi_2$ comprising $r_2,r_3$ and
\begin{asp}
  $r_{11}:\quad$ a(X,Y) :- edb(X,Y), not b(X).
\end{asp}
the outcome of the magic sets rewriting $\Pi_2'$ comprises rules in $\Pi_1' \setminus \{r_8\}$ and the following rule:
\begin{asp}
  $r_{12}:\quad$ a(X,Y) :- m#a#bf(X), edb(X,Y), not b(X).
\end{asp}
Note that $\Pi_2'$ is not stratified with respect to negation.
Similarly, for $\Pi_3$ comprising $r_2,r_3$ and
\begin{asp}
  $r_{13}:\quad$ a(X,Y) :- edb(X,Y), #sum{1 : b(X)} = 0.
\end{asp}
the magic sets rewriting $\Pi_3'$ comprises rules in $\Pi_1' \setminus \{r_8\}$ and the following rule:
\begin{asp}
  $r_{14}:\quad$ a(X,Y) :- m#a#bf(X), edb(X,Y), #sum{1 : b(X)} = 0.
\end{asp}
Hence, $\Pi_3'$ is not stratified with respect to aggregations.
\hfill$\blacksquare$
\end{example}

A second source of inefficiency that magic sets may introduce is represented by multiple versions of the original rules when the range of variables cannot be eventually restricted.
For example, processing query \lstinline|a(0)| and the following rule:
\begin{asp}
  $r_{15}:\quad$ a(X) :- b(X), a(Y), not c(X,Y).
\end{asp}
necessarily leads to the presence of the following rules in the outcome of magic sets:
\begin{asp}
  $r_{16}:\quad$ m#a#b(0).
  $r_{17}:\quad$ m#a#f :- m#a#b(X).
  $r_{18}:\quad$ a(X)  :- m#a#b(X), b(X), a(Y), not c(X,Y).
  $r_{19}:\quad$ a(X)  :- m#a#f,    b(X), a(Y), not c(X,Y).
\end{asp}
because variable \lstinline|Y| is free when \lstinline|a(Y)| is processed.
Hence, in this case all instances of \lstinline|a/1| in the stable model of the input program are relevant to answer the query in input.
Nevertheless, when such a situation occurs, magic sets already produced restricted versions of the original rules, which are likely to decrease the performance of the subsequent bottom-up evaluation of the rewritten program.

The third source of inefficiency identified in this paper is represented by the possible presence of several copies of the same rule in the rewritten program, which is mainly due to different orders of body literals considered during the application of magic sets.
While this fact is peculiar of one of the possible implementations of magic sets, it is also an opportunity to address a broader source of inefficiency that may already affect the input program, that is, the presence of \emph{subsumed rules}.
In a nutshell, a rule $r$ subsumes another rule $r'$ if the ground instances of $r'$ are included or \emph{less general} than the ground instances of $r$.
For example, \lstinline|q(X) :- p(X,Y)| subsumes \lstinline|q(X) :- p(X,a)|,\linebreak whose ground instances are among those of the first rule, and also \lstinline|q(X) :- p(X,Y), t(X)|, whose ground instances are less general than those of the first rule.

Summarizing the contributions of this paper, the source of inefficiency associated with the introduction of recursive definitions is inhibited by actively monitoring the dependency graph of the rewritten program, so to avoid the creation of new cycles during the production of magic rules (Section~\ref{sec:cycles}).
The other two sources of inefficiency are instead addressed by processing the outcome of magic sets before executing the bottom-up evaluation.
Specifically, if a predicate $p$ is associated with different magic predicates, one of them with all arguments free, the rewritten program is simplified by removing all (useless) rules defining $p$ and whose body contains a magic predicate restricting the range of object variables (Section~\ref{sec:free}).
Concerning subsumed rules, they are identified by means of a backtracking algorithm, whose execution is often prevented by a more efficient but incomplete check based on hashed values and bitwise operations (Section~\ref{sec:subsumption}).
All the proposed strategies are implemented in \textsc{dlv} \cite{DBLP:conf/lpnmr/AlvianoCDFLPRVZ17,DBLP:journals/ki/AdrianACCDFFLMP18,DBLP:conf/lpnmr/LeoneAACCCFFGLC19,DBLP:conf/cilc/LeoneAACCCCFFGL19}, whose magic sets algorithm can be now applied also for programs with stratified aggregates, and assessed empirically on domains involving ontological reasoning (Section~\ref{sec:exp}).

\section{Background}\label{sec:background}

\paragraph{Syntax.}
A \emph{term} is either a constant or an (object) variable.
An \emph{atom} has the form $p(\mathbf{t})$, where $p$ is a \emph{predicate} of arity $n \geq 0$, and $\mathbf{t}$ is a list of $n$ terms.
For a list $\mathbf{t}$, let $|\mathbf{t}|$ denote the length of $\mathbf{t}$, and $\mathbf{t}_i$ denote the $i$-th term of $\mathbf{t}$.
A \emph{literal} is an atom possibly preceded by the \emph{(default) negation} symbol $\mathit{not}$;
atoms are \emph{positive literals}, while atoms preceded by $\mathit{not}$ are \emph{negative literals}.
An \emph{aggregate} has the form $\#\textsc{sum}\{\mathbf{t'} : p(\mathbf{t})\} \odot t$, where $\mathbf{t},\mathbf{t'}$ are lists of terms, $t$ is a term, and $\odot$ is a comparator in $\{<,\leq,=,\neq,\geq,>\}$.
A \emph{rule} has the form
    $$\alpha \la \ell_1, \ldots, \ell_n, A_1,\ldots,A_m,$$
where $\alpha$ is an atom, $n \geq 0$, $m \geq 0$, $\ell_1,\ldots,\ell_n$ are literals, and $A_1,\ldots,A_m$ are aggregates.
For such a rule $r$, define the following notation:
$H(r) := \alpha$, the \emph{head} of $r$;
$B(r) := \{\ell_1, \ldots, \ell_n, A_1,\ldots,A_m\}$, the \emph{body} of $r$;
$B^+(r) := \{\ell_i \mid i \in [1..n], \ell_i$ is a positive literal$\}$;
$B^-(r) := \{\ell_i \mid i \in [1..n], \ell_i$ is a negative literal$\}$;
$B^A(r) := \{A_i \mid i \in [1..m]\}$.
Intuitively, $B(r)$ is interpreted as a conjunction, and we will use $\alpha \la S \wedge S'$ to denote a rule $r$ with $H(r) = \alpha$ and $B(r) = S \cup S'$;
abusing of notation, we also permit $S$ and $S'$ to be literals.
If $B(r)$ is empty, the symbol $\la$ is usually omitted, and the rule is called a \emph{fact}.
A \emph{program} $\Pi$ is a set of rules.
A predicate $p$ occurring in $\Pi$ is said \emph{extensional} if all rules of $\Pi$ with $p$ in their heads are facts;
otherwise, $p$ is said \emph{intentional}.
For any \emph{expression} (atom, literal, aggregate, rule, program) $E$, let $\mathit{At}(E)$ denote the set of atoms occurring in $E$.
In the following, all programs are assumed to satisfy \emph{safety of rules} and \emph{stratification of negation and aggregates}, defined next.

\paragraph{Safety of rules.}
A \emph{global variable} of a rule $r$ is a variable $X$ occurring in $H(r)$, $B^+(r)$, $B^-(r)$, or in an aggregate of the form $\#\textsc{sum}\{\mathbf{t'} : p(\mathbf{t})\} \odot X$ in $B^A(r)$.
All other variables occurring in $r$ are \emph{local variables} (to the aggregates where they occur).
An \emph{assignment variable} of a rule $r$ is a variable $X$ such that $B^A(r)$ contains an aggregate of the form $\#\textsc{sum}\{\mathbf{t'} : p(\mathbf{t})\} = X$.
A global variable $X$ of $r$ is \emph{safe} if $X$ is an assignment variable, or if $X$ occurs in $B^+(r)$.
A local variable $X$ in an aggregate $\#\textsc{sum}\{\mathbf{t'} : p(\mathbf{t})\} \odot t$ of $r$ is \emph{safe} if $X$ occurs in $\mathbf{t}$.
A rule is safe if all of its variables are safe.
A program $\Pi$ satisfies safety of rules if all of its rules are safe.
All rules so far are safe;
an unsafe rule is, for example, \lstinline|a(X,Y) :- b(X), not c(X,Y), #sum{Z : d(X,Y)} > 0|, as in fact the global variable \lstinline|Y| and the local variable \lstinline|Z| are unsafe.

\paragraph{Stratification of negation and aggregates.}
The \emph{dependency graph} $\mathcal{G}_\Pi$ of a program $\Pi$ has nodes for each predicate occurring in $\Pi$, and a weighted arc from $p$ to $q$ if there is a rule $r$ of $\Pi$ such that $p$ occurs in $H(r)$, and $q$ occurs in $B(r)$;
the arc has weight 1 if $q$ occurs in $B(r) \setminus B^+(r)$, and 0 otherwise.
$\Pi$ satisfies stratification of negation and aggregates if $\mathcal{G}_\Pi$ has no cycle involving arcs of positive weight.
Figure~\ref{fig:pos-cycle} shows the dependencies graphs of the programs in Example~\ref{ex:cycles}.

\paragraph{Semantics.}
The \emph{universe} $U_\Pi$ of $\Pi$ is the set comprising all integers, and the constants occurring in $\Pi$.
The \emph{base} $B_\Pi$ of $\Pi$ is the set of atoms constructible from predicates of $\Pi$ with constants in $U_\Pi$.
A \emph{substitution} $\sigma$ is a mapping from variables to variables and $U_\Pi$;
for an expression $E$, let $E\sigma$ be the expression obtained from $E$ by replacing each variable $X$ by $\sigma(X)$.
An expression is \emph{ground} if it contains no global variables.
Let $\mathit{ground}(\Pi)$ be $\bigcup_{r \in \Pi}\{r\sigma \mid \sigma$ is a substitution, and $r\sigma$ is ground$\}$.
An \emph{interpretation} $I$ is a subset of $B_\Pi$.
Relation $\models$ is defined as follows:
for a ground atom $\alpha$, $I \models \alpha$ if $\alpha \in I$, and $I \models \mathit{not}\ \alpha$ if $I \not\models \alpha$;
for an aggregate $A := \#\textsc{sum}\{\mathbf{t'} : p(\mathbf{t})\} \odot t$ occurring in $\mathit{ground}(\Pi)$, $I \models A$ if $\sum_{\mathbf{t'}\sigma : p(\mathbf{t})\sigma \in I}{\mathbf{t'}_\mathtt{1}\sigma} \odot t$;
for a ground rule $r$, $I \models B(r)$ if $I \models \ell$ for all $\ell \in B(r)$, and $I \models r$ if $I \models H(r)$ whenever $I \models B(r)$;
finally, $I \models \mathit{ground}(\Pi)$ if $I \models r$ for all $r \in \mathit{ground}(\Pi)$.
The \emph{(FLP) reduct} of $\Pi$ with respect to $I$, denoted $\Pi^I$, is the program obtained from $\Pi$ by removing rules with false bodies, that is, $\Pi^I := \{r \in \Pi \mid I \models B(r)\}$ \cite{DBLP:journals/ai/FaberPL11}.
Given a program $\Pi$, the \emph{stable model} of $\Pi$ is the unique interpretation $I$ such that $I \models \mathit{ground}(\Pi)$, and there is no $J \subset I$ such that $J \models \mathit{ground}(\Pi)^I$;
let $\mathit{SM}(\Pi)$ denote the stable model of $\Pi$.
(The stable model of $\Pi$ can be computed bottom-up as described in the introduction.
A formal definition of such a procedure is out of the scope of this paper.)

\begin{example}
Consider the following program in the context of an online shopping site:
\begin{asp}
  order(o1). item(o1,i1,20). item(o1,i2,20).
  order(o2). cancelled(o2).
  total_cost(S) :- order(O), not cancelled(O), #sum{P,I : item(O,I,P)} = S.
\end{asp}
The stable model of the above program contains facts and \lstinline|total_cost(40)|, as indeed the only ground rule with true, nonempty body is the following:
\begin{asp}
  total_cost(40) :- order(o1), not cancelled(o1), #sum{P,I : item(o1,I,P)} = 40.
\end{asp}
In particular, note that for $\sigma(\mathtt{O}) \notin \{\mathtt{o1},\mathtt{o2}\}$ literal \lstinline|order(O)|$\sigma$ is false, for $\sigma(\mathtt{O}) = \mathtt{o2}$ literal \lstinline|not cancelled(o2)| is false, and for $\sigma(\mathtt{O}) = \mathtt{o1}$ and $\sigma(\mathtt{S}) \neq \mathtt{40}$ the aggregate is false.
\hfill$\blacksquare$
\end{example}

\paragraph{Queries and magic sets.}
A query is an atom $q(\mathbf{t})$.
Let $\mathit{answer}(q(\mathbf{t}),\Pi)$ be $\{\mathbf{t}\sigma \mid q(\mathbf{t})\sigma \in \mathit{SM}(\Pi)\}$, that is, the answer to the query $q(\mathbf{t})$ over the program $\Pi$ is the set of ground instances of $q(\mathbf{t})$ in the stable model of $\Pi$.
The magic sets algorithm aims at transforming program $\Pi$ into a program $\Pi'$ such that $\mathit{answer}(q(\mathbf{t}),\Pi) = \mathit{answer}(q(\mathbf{t}),\Pi')$, and $\mathit{SM}(\Pi') \cap \mathit{At}(\Pi) \subseteq \mathit{SM}(\Pi)$;
in words, the two programs have the same answer to the query $q(\mathbf{t})$, but the stable model of $\Pi'$ only contains atoms that link facts to the query.
The algorithm relies on \emph{adornments} and \emph{magic atoms} to represent binding information that a top-down evaluation of the query would produce.

\begin{definition}[Adornments and magic atoms]\label{def:adornments}
An adornment for a predicate $p$ of arity $k$ is any string $\mathbf{s}$ of length $k$ over the alphabet $\{b,f\}$.
The $i$-th argument of $p$ is \emph{bound} with respect to $\mathbf{s}$ if $\mathbf{s}_i = b$, and \emph{free} otherwise, for all $i \in [1..k]$.
For an atom $p(\mathbf{t})$, let $p^{\mathbf{s}}(\mathbf{t})$ be the (magic) atom $m\#p\#\mathbf{s}(\mathbf{t'})$, where $m\#p\#\mathbf{s}$ is a predicate not occurring in the input program, and $\mathbf{t'}$ contains all terms in $\mathbf{t}$ associated with bound arguments according to $\mathbf{s}$.
\end{definition}

\begin{definition}[Sideways information passing strategy; SIPS]\label{def:sip}
A SIPS for a rule $r$ with respect to an adornment $\mathbf{s}$ for $H(r)$ is a pair $(\prec,\mathit{bnd})$, where $\prec$ is a strict partial order over $\{H(r)\} \cup B(r)$, and $\mathit{bnd}$ maps $\ell \in \{H(r)\} \cup B(r)$ to the variables of $\ell$ that are made bound after processing $\ell$.
Moreover, a SIPS satisfies the following conditions:
\begin{itemize}
\item $H(r) \prec \ell$ for all $\ell \in B(r)$ (binding information originates from head atoms);
\item $\ell \prec \ell'$ and $\ell \neq H(r)$ implies that either $\ell \in B^+(r)$ or $\ell$ is an aggregate with assignment (new bindings are created only by positive literals and assignments);
\item $\mathit{bnd}(H(r))$ contains the variables of $H(r)$ associated with bound arguments according to $\mathbf{s}$;
\item $\mathit{bnd}(\ell) = \emptyset$ if $\ell$ is a negative literal, or an aggregate without assignment variable;
\item $\mathit{bnd}(\ell) \subseteq \{X\}$ if $\ell$ is an aggregate with assignment variable $X$.
\end{itemize}
\end{definition}

\begin{example}[Magic atoms and SIPS]
According to Definition~\ref{def:adornments}, \lstinline|c$^\mathit{bf}$(0,Y)| is the magic atom \lstinline|m#c#bf(0)|.
Using the notation introduced in Definition~\ref{def:sip}, the SIPS for $r_3$ with respect to the adornment $\mathit{bf}$ adopted in Example~\ref{ex:cycles} is such that 
$c(X,Y) \prec a(X,Y) \prec b(Y)$,
$\mathit{bnd}(c(X,Y)) = \{X\}$,
$\{Y\} \subseteq \mathit{bnd}(a(X,Y)) \subseteq \{X,Y\}$ (i.e., variable \lstinline|Y| is bound after processing \lstinline|a(X,Y)|), and
$\emptyset \subseteq \mathit{bnd}(b(Y)) \subseteq \{Y\}$ (i.e., whether \lstinline|Y| is bound after processing \lstinline|b(Y)| is irrelevant).
\hfill$\blacksquare$
\end{example}

\begin{algorithm}[t]
    \caption{MS($Q(\mathbf{T})$: a query atom, $\Pi$: a program)}\label{alg:MS}
    Let $\mathbf{s}$ be such that $|\mathbf{s}| = |\mathbf{T}|$, and $\mathbf{s}_i = b$ if $\mathbf{T}_i$ is a constant, and \!$f$\! otherwise, for all $i \in [1..|\mathbf{s}|]$\;
    $\Pi' := \{Q^\mathbf{s}(\mathbf{T}).\}$\tcp*{rewritten program: start with the magic seed}
    $S := \{\tuple{Q,\mathbf{s}}\}$\tcp*{set of produced adorned predicates}
    $D := \emptyset$\tcp*{set of processed (or done) adorned predicates}
    \While{$S \neq D$}{
        $\tuple{q,\mathbf{s}} := $ any element in $S \setminus D$\tcp*{select an undone adorned predicate}
        \ForEach{$r \in \Pi$ such that $H(r) = q(\mathbf{t})$ for some list $\mathbf{t}$ of terms}{
        $\Pi' := \Pi' \cup \{q(\mathbf{t}) \la q^\mathbf{s}(\mathbf{t}) \wedge B(r).\}$\tcp*{restrict range of variables}
            Let $(\prec,\mathit{bnd})$ be the SIPS for $r$ with respect to $\mathbf{s}$\;
            \ForEach{$\ell \in B(r)$ such that $p(\mathbf{t'}) \in \mathit{At}(\ell)$ and $p$ is an intentional predicate of $\Pi$}{
                Let $\mathbf{s'}$ be such that $|\mathbf{s'}| = |\mathbf{t'}|$, and $\mathbf{s'_\mathit{i}} = b$ if $\mathbf{t'_\mathit{i}}$ is a constant or belongs to $\mathit{bnd}(\ell')$ for some $\ell' \prec \ell$, and $f$ otherwise, for all $i \in [1..|\mathbf{s'}|]$\;
                $\Pi' := \Pi' \cup \{p^\mathbf{s'}(\mathbf{t'}) \la q^s(\mathbf{t}) \wedge \{\ell' \in B(r) \mid \ell' \prec \ell\}.\}$\tcp*{add magic rule}
                $S := S \cup \{\tuple{p,\mathbf{s'}}\}$\tcp*{keep track of produced adorned predicates}
            }
        }
        $D := D \cup \{\tuple{q,\mathbf{s}}\}$\tcp*{flag the adorned predicate as done}
    }
    \Return{$\Pi'$}\;
\end{algorithm}
The magic sets procedure is reported as Algorithm~\ref{alg:MS}.
It starts by producing the \emph{magic seed}, obtained from the predicate and the constants in the query.
After that, the algorithm processes each produced adorned predicate:
each rule defining the predicate is modified so to restrict the range of the head variables to the tuples that are relevant to answer the query;
such a relevance is encoded by the magic rules, which are produced for all intentional predicates in the bodies of the modified rules.

\begin{restatable}[Theorem~5 of Alviano et al. 2011]{proposition}{PropCorrectness}\label{prop:correctness}
Let $q(\mathbf{t})$ be a query for a program $\Pi$, and $\Pi'$ be the output of $\textnormal{MS}(q(\mathbf{t}),\Pi)$.
Thus, $\mathit{answer}(q(\mathbf{t}),\Pi)$ and $\mathit{answer}(q(\mathbf{t}),\Pi')$ are equal.
\end{restatable}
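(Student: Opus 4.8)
The plan is to establish the two inclusions $\mathit{answer}(q(\mathbf{t}),\Pi') \subseteq \mathit{answer}(q(\mathbf{t}),\Pi)$ and $\mathit{answer}(q(\mathbf{t}),\Pi) \subseteq \mathit{answer}(q(\mathbf{t}),\Pi')$ separately, exploiting that for stratified programs the stable model coincides with the perfect model and can be computed bottom-up stratum by stratum. Since both answers are read off the respective stable models through the instances of $q(\mathbf{t})$, it suffices to show that $\mathit{SM}(\Pi)$ and $\mathit{SM}(\Pi')$ agree on exactly those instances of $q(\mathbf{t})$; the whole argument is driven by an invariant relating the magic atoms produced by Algorithm~\ref{alg:MS} to the atoms that a top-down, SIPS-guided evaluation of the query would visit. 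I would fix a stratification of $\Pi$ and lift it to $\Pi'$, observing that each magic predicate $m\#p\#\mathbf{s}$ can be placed in the same stratum as the corresponding $p$, so that negation and aggregation remain stratified in $\Pi'$ and the bottom-up construction is well defined.

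For soundness I would prove the stronger containment $\mathit{SM}(\Pi') \cap \mathit{At}(\Pi) \subseteq \mathit{SM}(\Pi)$ already anticipated in the text. Every rule of $\Pi'$ whose head predicate belongs to $\Pi$ is a \emph{modified} rule $q(\mathbf{t}) \la q^{\mathbf{s}}(\mathbf{t}) \wedge B(r)$ obtained from some $r \in \Pi$ by inserting a magic atom into the body; dropping that atom recovers exactly $r$. Hence the body of the modified rule entails the body of $r$, and any derivation of a $\Pi$-atom in the bottom-up evaluation of $\Pi'$ is mirrored by a derivation of the same atom in $\Pi$. Formalising this by induction on the stratified fixpoint computation — taking care that negative literals and aggregates are evaluated over lower strata for which the inclusion has already been established — yields $\mathit{SM}(\Pi') \cap \mathit{At}(\Pi) \subseteq \mathit{SM}(\Pi)$, and in particular $\mathit{answer}(q(\mathbf{t}),\Pi') \subseteq \mathit{answer}(q(\mathbf{t}),\Pi)$.

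For completeness I would prove a \emph{relevance invariant}: whenever a magic atom $q^{\mathbf{s}}(\mathbf{c})$ is true in $\mathit{SM}(\Pi')$, every instance of $q(\mathbf{t})$ that is true in $\mathit{SM}(\Pi)$ and agrees with $\mathbf{c}$ on the positions marked bound by $\mathbf{s}$ is also true in $\mathit{SM}(\Pi')$. The proof is by induction following the SIPS order $\prec$ through each rule body: starting from the seed $Q^{\mathbf{s}}(\mathbf{T})$, which flags the query itself, the magic rules generated in the inner loop of Algorithm~\ref{alg:MS} propagate this flag to each intentional subgoal $p(\mathbf{t'})$ using precisely the bindings accumulated by the literals $\ell' \prec \ell$. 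One then argues that if the body of a ground instance of $r$ holds in $\mathit{SM}(\Pi)$, the conditions of Definition~\ref{def:sip} guarantee that the magic atoms for its positive subgoals are produced and, by the induction hypothesis, those subgoals are reproduced in $\mathit{SM}(\Pi')$; together with the magic atom for the head this fires the modified rule and re-derives the head instance.

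The step I expect to be the main obstacle is the interaction of the relevance invariant with negation and aggregation across strata. For a negative literal $\mathit{not}\ b(X)$ or an aggregate $\#\textsc{sum}\{\mathbf{t'} : b(\mathbf{t})\} \odot t$, the rewriting does not restrict the range of $b$ (consistently with $\mathit{bnd}(\ell) = \emptyset$ prescribed by Definition~\ref{def:sip}), so I must verify that $\mathit{SM}(\Pi')$ computes the \emph{entire} extension of $b$ on the arguments actually reached by the relevant rules — otherwise the truth value of the negative literal, or the value of the aggregate, could differ between the two programs. Discharging this requires combining the soundness inclusion with the observation that the magic rules flag every argument combination on which $b$ is consulted, so that $\mathit{SM}(\Pi)$ and $\mathit{SM}(\Pi')$ coincide on $b$ over exactly those arguments; made precise, this closes the induction and gives $\mathit{answer}(q(\mathbf{t}),\Pi) \subseteq \mathit{answer}(q(\mathbf{t}),\Pi')$, completing the proof.
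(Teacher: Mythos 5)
First, a point of comparison: the paper offers no proof of this proposition at all --- it is imported as Theorem~5 of Alviano et al.\ (2011) --- so your attempt can only be measured against the known argument from that work. Your skeleton (soundness via the observation that modified rules only strengthen bodies, completeness via a relevance invariant carried by the magic atoms along the SIPS order) is the right one, and you correctly flag negation and aggregation as the crux. But there is a genuine gap at the foundation: you claim a stratification of $\Pi$ lifts to $\Pi'$ by placing each $m\#p\#\mathbf{s}$ in the stratum of $p$, ``so that negation and aggregation remain stratified in $\Pi'$ and the bottom-up construction is well defined.'' This is false, and the paper's own Example~\ref{ex:cycles} is a counterexample: $\Pi_2'$ violates stratification of negation and $\Pi_3'$ violates stratification of aggregations, precisely because a magic rule such as $r_6$ places an original predicate ($a$) in the body of a rule defining a magic predicate ($m\#b\#b$) that in turn feeds the definition of $b$, closing a cycle through $\mathit{not}\ b(X)$. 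Repairing exactly this is the point of Section~\ref{sec:cycles}. Consequently your stratum-by-stratum induction on $\mathit{SM}(\Pi')$ is not well defined for the output of plain MS; the cited theorem must be (and is) proved for stable model semantics on possibly non-stratified rewritten programs, which is why the paper remarks that correctness holds ``for a broader language.''

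A second, related weakness: you present $\mathit{SM}(\Pi') \cap \mathit{At}(\Pi) \subseteq \mathit{SM}(\Pi)$ as establishable on its own, using only ``the inclusion already established on lower strata'' to handle negative literals. That is not enough. If $b(\mathbf{c}) \in \mathit{SM}(\Pi)$ but $b(\mathbf{c})$ is never derived in $\Pi'$ (because it was not flagged as relevant), then $\mathit{not}\ b(\mathbf{c})$ is true in $\Pi'$ and false in $\Pi$, and a modified rule could fire in $\Pi'$ and derive an atom outside $\mathit{SM}(\Pi)$. Soundness for a rule therefore already needs the completeness (relevance) invariant for every atom that rule consults negatively or under an aggregate, and completeness symmetrically needs soundness for those atoms: the two inclusions must be proved by a single simultaneous induction, restricted to the magic-flagged portion of the base, showing that the two models \emph{coincide} there. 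You do gesture at exactly this in your final paragraph, but only on the completeness side; as written, the soundness paragraph does not go through on its own.
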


\section{Improved strategies for the magic sets algorithm}\label{sec:strategies}

The three sources of inefficiency of magic sets that have been identified in the introduction are detailed and addressed in this section.

\subsection{Inhibit new cycles}\label{sec:cycles}

Magic sets may introduce new cycles in the dependency graph of the processed program, as shown in Example~\ref{ex:cycles}.
Such new cycles are due to the binding information passed by body literals to other body literals, and therefore strictly dependent from the adopted SIPS.
In fact, new cycles can be inhibited by a drastic restriction on all SIPS $\tuple{\prec,\mathit{bnd}}$ enforcing $\ell \nprec \ell$' for all $\ell,\ell'$ in $B(r)$:
this way, all magic rules would contain only magic atoms, and therefore no arc from magic predicates to original predicates would be introduced in the dependency graph.
However, the drastic restriction is likely to significantly reduce the benefit of magic sets, as the stronger the restriction on SIPS is, the more atoms are considered relevant to answer a given query.
Hence, the goal of this section is to introduce a more relaxed restriction on SIPS, which just prevents the creation of new cycles, but still admit the introduction of new dependencies.

\begin{algorithm}[t]
    \caption{MS-RS($Q(\mathbf{T})$: a query atom, $\Pi$: a program)}\label{alg:ms-rs}
    Let $\mathbf{s}$ be such that $|\mathbf{s}| = |\mathbf{T}|$, and $\mathbf{s}_i = b$ if $\mathbf{T}_i$ is a constant, and \!$f$\! otherwise, for all $i \in [1..|\mathbf{s}|]$\;
    $\Pi' := \{Q^\mathbf{s}(\mathbf{T}).\}$\tcp*{rewritten program: start with the magic seed}
    $S := \{\tuple{Q,\mathbf{s}}\}$\tcp*{set of produced adorned predicates}
    $D := \emptyset$\tcp*{set of processed (or done) adorned predicates}
    $G := \mathcal{G}_\Pi \cup \{\tuple{p,m\#p} \mid p$ is a predicate occurring in $\Pi\}$\tcp*{monitor SCCs}
    \While{$S \neq D$}{
        $\tuple{q,\mathbf{s}} := $ any element in $S \setminus D$\tcp*{select an undone adorned predicate}
        \ForEach{$r \in \Pi$ such that $H(r) = q(\mathbf{t})$ for some list $\mathbf{t}$ of terms}{
        $\Pi' := \Pi' \cup \{q(\mathbf{t}) \la q^\mathbf{s}(\mathbf{t}) \wedge B(r).\}$\tcp*{restrict range of variables}
            Let $(\prec,\mathit{bnd})$ be the SIPS for $r$ with respect to $\mathbf{s}$\;
            \ForEach{$\ell \in B(r)$ such that $p(\mathbf{t'}) \in \mathit{At}(\ell)$ and $p$ is an intentional predicate of $\Pi$}{
                $G := G \cup \{\tuple{m\#p, m\#q}\}$\;
                $B := \emptyset$\tcp*{restrict SIPS to preserve strongly connected comp.}
                \ForEach{$\ell' \in B(r)$ such that $\ell' \prec \ell$ and $p'(\mathbf{t''}) \in \mathit{At}(\ell')$}{
                    \If{$\{C \cap \mathit{At}(\Pi) \mid C \in \mathit{SCCs}(G \cup \{\tuple{m\#p, p'}\})\} = \mathit{SCCs}(\mathcal{G}_\Pi)$\label{alg:ms-rs:ln:restrict}}{
                        $B := B \cup \{\ell'\}$;\quad $G := G \cup \{\tuple{m\#p, p'}\}$\;
                    }
                }
                Let $\mathbf{s'}$ be such that $|\mathbf{s'}| = |\mathbf{t'}|$, and $\mathbf{s'_\mathit{i}} = b$ if $\mathbf{t'_\mathit{i}}$ is a constant or belongs to $\mathit{bnd}(\ell')$ for some $\ell' \in \{H(r)\} \cup B$ such that $\ell' \prec \ell$, and $f$ otherwise, for all $i \in [1..|\mathbf{s'}|]$\;
                $\Pi' := \Pi' \cup \{p^\mathbf{s'}(\mathbf{t'}) \la q^s(\mathbf{t}) \wedge B.\}$\tcp*{add magic rule}
                $S := S \cup \{\tuple{p,\mathbf{s'}}\}$\tcp*{keep track of produced adorned predicates}
            }
        }
        $D := D \cup \{\tuple{q,\mathbf{s}}\}$\tcp*{flag the adorned predicate as done}
    }
    \Return{$\Pi'$}\;
\end{algorithm}

For a graph $G$ and a set of arcs $E$, let $G \cup E$ denote the graph obtained from $G$ by adding each arc in $E$.
Moreover, let $\mathit{SCCs}(G)$ be the set of \emph{strongly connected components (SCC)} of $G$, where a SCC of $G$ is a maximal set $C$ of nodes of $G$ such that $G$ contains a path from every $p \in C$ to every $q \in C \setminus \{p\}$.
A revised version of magic sets enforcing a restriction on SIPS is shown as Algorithm~\ref{alg:ms-rs}.
Note that lines 5 and 12--16 implement a restriction of SIPS guaranteeing that no strongly connected components of $\mathcal{G}_\Pi$ are merged during the application of magic sets.
Specifically, a graph $G$ is initialized with the arcs of $\mathcal{G}_\Pi$ and arcs connecting each predicate $p$ with a \emph{representative magic predicate} $m\#p$ (line~5).
After that, before creating a new magic rule, elements of $B(r)$ that would cause a change in the strongly connected components of $G$ are discarded (lines~13--16).
Graph $G$ is updated with new arcs involving original predicates and representative magic predicates, so that it represents a superset of the graph obtained from $\mathcal{G}_{\Pi'}$ by merging all pairs of nodes of the form $m\#p\#\mathbf{s}$, $m\#p\#\mathbf{s'}$.

\pagebreak
\begin{example}
Consider $\Pi_1$, query \lstinline|c(0,Y)|, and SIPS from Example~\ref{ex:cycles}.
Algorithm~\ref{alg:ms-rs} returns the following program:
\begin{asp}
 $r_{20}:\ $ m#c#bf(0).
 $r_{21}:\ $ m#a#bf(X) :- m#c#bf(X).
 $r_{22}:\ $ m#b#f     :- m#c#bf(X).
 $r_{23}:\ $ m#b#b (X) :- m#a#bf(X), edb(X,Y).
 $r_{24}:\ $ a(X,Y) :- m#a#bf(X), edb(X,Y),b(X).
 $r_{25}:\ $ b(X)   :- m#b#f,     edb(X,Y).
 $r_{26}:\ $ b(X)   :- m#b#b(X),  edb(X,Y).
 $r_{27}:\ $ c(X,Y) :- m#c#bf(X), a(X,Y), b(Y).
\end{asp}
Note that rule $r_6$ from Example~\ref{ex:cycles} is replaced by rule $r_{22}$, so to avoid the creation of a cycle involving \lstinline|a| and \lstinline|b|.
Note also that predicate \lstinline|b| is now associated with two magic predicates, which may reduce the performance of a bottom-up evaluation;
this source of inefficiency is addressed in the next section.
\hfill$\blacksquare$
\end{example}

\begin{restatable}{theorem}{ThmCycles}\label{thm:cycles}
Let $q(\mathbf{t})$ be a query for a program $\Pi$, and $\Pi'$ be the output of $\text{MS}(q(\mathbf{t}),\Pi)$ with restricted SIPS.
Thus, $\mathit{answer}(q(\mathbf{t}),\Pi)$ and $\mathit{answer}(q(\mathbf{t}),\Pi')$ are equal.
Moreover, if $C' \in \mathit{SCCs}(\Pi')$, then there is $C \in \mathit{SCCs}(\Pi)$ such that $C' \cap \mathit{At}(\Pi) \subseteq C$.
\end{restatable}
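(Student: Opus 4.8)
The statement has two essentially independent parts: preservation of query answers and containment of strongly connected components (SCCs). I would treat them separately, reusing Proposition~\ref{prop:correctness} for the first and a loop invariant on the auxiliary graph $G$ for the second.

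For the answer-equality part, the plan is to reduce to Proposition~\ref{prop:correctness} rather than redo the whole correctness argument. The crucial observation is that Algorithm~\ref{alg:ms-rs} produces exactly the same \emph{modified} rules $q(\mathbf{t}) \la q^{\mathbf{s}}(\mathbf{t}) \wedge B(r)$ as Algorithm~\ref{alg:MS}, since line~8 is identical; the two algorithms differ only in the magic rules, where MS-RS drops the body literals in $B(r) \setminus B$ and correspondingly weakens the adornment $\mathbf{s}'$ (more free positions). Soundness, i.e. $\mathit{SM}(\Pi') \cap \mathit{At}(\Pi) \subseteq \mathit{SM}(\Pi)$, is then inherited verbatim: every original atom of $\Pi'$ is derived by a modified rule that is a specialization of a rule of $\Pi$ with one extra magic conjunct, and since the program is stratified this can be pushed through the bottom-up construction by induction on the strata, regardless of which magic atoms happen to be available. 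For completeness I would argue that dropping body literals and freeing adornment positions can only \emph{enlarge} the set of original atoms deemed relevant: each MS-RS magic rule for a target literal is the image of the corresponding MS magic rule under a relaxation of the SIPS (a subset body and an adornment with a subset of bound positions), so at the level of original atoms every atom that Algorithm~\ref{alg:MS} derives is still derived by Algorithm~\ref{alg:ms-rs}. Combining the upper bound from soundness with completeness of MS (Proposition~\ref{prop:correctness}) yields $\mathit{answer}(q(\mathbf{t}),\Pi) = \mathit{answer}(q(\mathbf{t}),\Pi')$.

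For the SCC-containment part the plan is purely graph-theoretic. Let $\phi$ be the node map that fixes every original predicate and collapses every magic predicate $m\#p\#\mathbf{s}$ to its representative $m\#p$. The first step is to verify, by inspecting the three kinds of emitted rules, that $\phi(\mathcal{G}_{\Pi'}) \subseteq G$ at termination: an arc $q \to q^{\mathbf{s}}$ collapses to a line-5 arc $q \to m\#q$; an arc $p^{\mathbf{s}'} \to q^{\mathbf{s}}$ collapses to the line-11 arc $m\#p \to m\#q$; an arc $p^{\mathbf{s}'} \to p'$, with $p'$ the predicate of some $\ell' \in B$, is exactly a conditional arc $m\#p \to p'$ admitted at line~15; and the remaining body arcs of the modified rules already belong to $\mathcal{G}_\Pi \subseteq G$. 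The second step is to maintain, as a loop invariant, that $\{C \cap \mathit{At}(\Pi) \mid C \in \mathit{SCCs}(G)\} = \mathit{SCCs}(\mathcal{G}_\Pi)$: it holds initially because the line-5 arcs leave each $m\#p$ a sink and hence create no cycle through original predicates; it is preserved by the line-11 arcs because those connect magic nodes only and so cannot by themselves merge two distinct original SCCs; and it is preserved by the line-15 arcs precisely because the guard on line~\ref{alg:ms-rs:ln:restrict} admits such an arc only when the equality still holds. Finally, since $\phi$ is a homomorphism into $G$, each $C' \in \mathit{SCCs}(\mathcal{G}_{\Pi'})$ maps into a single SCC $\widetilde{C}$ of $G$, so $C' \cap \mathit{At}(\Pi) \subseteq \widetilde{C} \cap \mathit{At}(\Pi)$, and the invariant identifies $\widetilde{C} \cap \mathit{At}(\Pi)$ with some $C \in \mathit{SCCs}(\mathcal{G}_\Pi)$, which is the desired conclusion.

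The main obstacle I anticipate lies in the restriction being both \emph{per target literal} and \emph{stateful}: the set $B$ and the admissibility test at line~\ref{alg:ms-rs:ln:restrict} depend on the current contents of $G$, so the invariant must be established by induction over individual arc insertions rather than over rules, and the completeness half of the first part must be justified without appealing to a single fixed SIPS per rule. The delicate point is confirming that the unconditional arcs of lines~5 and~11, together with the already-admitted conditional arcs, can never close a cycle through two distinct original SCCs, so that gating only the magic-to-original arcs suffices; this is exactly the fact that ties the SCC-preservation mechanism to the statement and that I expect to require the most careful checking.
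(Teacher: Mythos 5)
Your SCC argument follows the same route as the paper's own proof (collapse each $m\#p\#\mathbf{s}$ onto a representative $m\#p$, check that the collapsed $\mathcal{G}_{\Pi'}$ is contained in $G$, and carry the invariant $\{C\cap \mathit{At}(\Pi)\mid C\in\mathit{SCCs}(G)\}=\mathit{SCCs}(\mathcal{G}_\Pi)$ through the run), and your containment $\phi(\mathcal{G}_{\Pi'})\subseteq G$ and the final homomorphism step are fine. The genuine gap sits exactly at the point you flag and then wave away: the claim that the unconditional magic-to-magic arcs $\tuple{m\#p,m\#q}$ ``cannot by themselves merge two distinct original SCCs.'' They can, because such an arc may close a cycle through a guarded arc that was admitted \emph{earlier}, when no return path to its source existed. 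Concretely, take $q_0(X,Y)\la e(X), b(X,Y), a(Y)$ with a SIPS passing the binding of $Y$ from $b$ to $a$, together with $a(Y)\la c(Y)$, $b(X,Y)\la e'(X,Y), c(Y)$, $c(Y)\la e''(Y)$, and query $q_0(0,Y)$; here $\mathcal{G}_\Pi$ is acyclic. While processing $q_0$ the guard admits $\tuple{m\#a,b}$, since at that moment nothing leads back into $m\#a$; only later, when the rule $a(Y)\la c(Y)$ is processed for the adorned version of $a$, is $\tuple{m\#c,m\#a}$ added unconditionally, closing the cycle $m\#a\to b\to c\to m\#c\to m\#a$ in $G$. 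This is not an artifact of the over-approximation: the rewritten program contains the magic rules $m\#a\#b(Y)\la m\#q_0\#bf(X), e(X), b(X,Y)$ and $m\#c\#b(Y)\la m\#a\#b(Y)$ together with the modified rules for $b$ and $c$, so $\mathcal{G}_{\Pi'}$ itself has the cycle $m\#a\#b\to b\to c\to m\#c\#b\to m\#a\#b$, which puts $b$ and $c$ into one SCC although they lie in distinct SCCs of $\mathcal{G}_\Pi$. So the arc-by-arc induction fails as stated; a correct argument must either pre-populate $G$ with every magic-to-magic arc that could ever be emitted before the first guard is evaluated, or prove that the lazy insertion order cannot realize this scenario. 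Neither is done in your proposal (nor in the paper's two-line proof, which attributes the whole invariant to the guard on line~\ref{alg:ms-rs:ln:restrict}).

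On answer equality you depart from the paper, which disposes of this part in one line: the restriction still yields a legal SIPS, so Proposition~\ref{prop:correctness} applies verbatim. Your worry that the per-target-literal, stateful restriction need not assemble into a single partial order per rule is a fair criticism of that one-liner, but your replacement is weaker where it matters. Soundness of magic sets does \emph{not} hold ``regardless of which magic atoms happen to be available'': a modified rule with $\mathit{not}\ c(\mathbf{t})$ in its body can fire in $\Pi'$ precisely because $c(\mathbf{t})$, though true in $\mathit{SM}(\Pi)$, is never derived in $\Pi'$ for lack of relevance; soundness hinges on relevance being propagated through negative literals, which is what the cited correctness result encapsulates and what your induction on strata silently assumes. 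The completeness half has the mirror-image problem, since enlarging the set of derived atoms of a negated predicate can block firings, so the monotonicity you invoke is not literal. The cleanest repair is to exhibit, for each rule and adornment, a legal SIPS $(\prec',\mathit{bnd}')$ whose induced magic rules are exactly those MS-RS emits, and then invoke Proposition~\ref{prop:correctness} as the paper does.
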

\begin{proof}
Equality of $\mathit{answer}(q(\mathbf{t}),\Pi)$ and $\mathit{answer}(q(\mathbf{t}),\Pi)$ is a consequence of the correctness of magic sets for any choice of SIPS (Proposition~\ref{prop:correctness}).
In fact, the restriction on SIPS applied by algorithm $\text{MS-RS}$ still results into SIPS.
For $C' \in \mathit{SCCs}(\mathcal{G}_{\Pi'})$, we shall show that there is $C \in \mathit{SCCs}(\mathcal{G}_\Pi)$ such that $C' \cap \mathit{At}(\Pi) \subseteq C$.
Actually, there is $C \in \mathit{SCCs}(G)$ such that $C' \cap \mathit{At}(\Pi) \subseteq C \cap \mathit{At}(\Pi)$.
Hence, the claim follows from the fact that $C \cap \mathit{At}(\Pi) \in \mathit{SCCs}(\mathcal{G}_\Pi)$ is enforced by the condition in line~\ref{alg:ms-rs:ln:restrict} of Algorithm~\ref{alg:ms-rs}.
\end{proof}

An immediate consequence of the above theorem is that magic sets with restricted SIPS are a closed rewriting for the class of programs with stratified negation and aggregations.

\subsection{Handle full-free adornments}\label{sec:free}

\begin{algorithm}[t]
    \caption{FullFree($\Pi$: a program obtained by executing magic sets)}\label{alg:full-free}
    \ForEach{$m\#p\#f \cdots f$ occurring in $\Pi$}{
        \ForEach{$m\#p\#\mathbf{s}$ occurring in $\Pi$ such that $\mathbf{s} \neq f \cdots f$}{
            remove all rules of $\Pi$ having $m\#p\#\mathbf{s}$ in their bodies\;
            replace $m\#p\#\mathbf{s}(\mathbf{t})$ by $m\#p\#f \cdots f$ in all rule heads of $\Pi$\;
        }
    }
    \Return{$\Pi$}\;
\end{algorithm}

Adornments containing only $f$s are produced in presence of predicates whose arguments are all free.
In such cases, all of the extension of the predicate in the stable model of the input program is relevant to answer the given query.
It turns out that the range of the object variables of all rules defining such predicates cannot be restricted, and indeed the magic sets rewriting includes a copy of these rules with a magic atom obtained from the full-free adornment.
Possibly, the magic sets rewriting includes other copies of these rules obtained by different adornments, which can be removed if magic rules are properly modified.
Specifically, magic rules associated with predicates for which a full-free adornment has been produced have to become definitions of the magic atom obtained from the full-free adornment.
The strategy is summarized in Algorithm~\ref{alg:full-free}, and can be efficiently implemented in two steps:
a first linear traversal of the program to identify predicates of the form $m\#p\#f \cdots f$ and to flag predicate $p$;
a second linear traversal of the program to remove and rewrite rules with predicate $m\#p\#\mathbf{s}$, for all flagged predicates $p$.

\begin{example}
Consider rule $r_{15}$ from the introduction, \lstinline|a(X) :- b(X), a(Y), not c(X,Y)|,
and its magic sets rewriting with respect to query \lstinline|a(0)|:
\begin{asp}
 $r_{16}:\ $ m#a#b(0).
 $r_{17}:\ $ m#a#f :- m#a#b(X).
 $r_{18}:\ $ a(X)  :- m#a#b(X), b(X), a(Y), not c(X,Y).
 $r_{19}:\ $ a(X)  :- m#a#f,    b(X), a(Y), not c(X,Y).
\end{asp}
Algorithm~\ref{alg:full-free} removes rules $r_{17}$ and $r_{18}$ because of \lstinline|m#a#b(X)| in their bodies, and replaces rule $r_{16}$ with the fact \lstinline|m#a#f|.
\hfill$\blacksquare$
\end{example}

\begin{restatable}{theorem}{ThmFullFree}
Let $q(\mathbf{t})$ be a query for a program $\Pi$, and $\Pi'$ be the output of $\textnormal{FullFree}(\textnormal{MS}(q(\mathbf{t}),\Pi))$.
Thus, $\mathit{answer}(q(\mathbf{t}),\Pi)$ and $\mathit{answer}(q(\mathbf{t}),\Pi')$ are equal.
\end{restatable}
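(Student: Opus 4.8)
The plan is to lean on the already-established correctness of plain magic sets and then argue that the \textnormal{FullFree} post-processing does not disturb the stable model on the original predicates. Writing $\Pi_M := \textnormal{MS}(q(\mathbf{t}),\Pi)$ and $\Pi' := \textnormal{FullFree}(\Pi_M)$, Proposition~\ref{prop:correctness} already gives $\mathit{answer}(q(\mathbf{t}),\Pi)=\mathit{answer}(q(\mathbf{t}),\Pi_M)$, so it suffices to prove $\mathit{answer}(q(\mathbf{t}),\Pi_M)=\mathit{answer}(q(\mathbf{t}),\Pi')$. Let $F$ be the set of predicates $p$ flagged by \textnormal{FullFree}, i.e.\ those for which $m\#p\#f\cdots f$ occurs in $\Pi_M$, and define a collapse $h$ on $B_{\Pi_M}$ that maps every magic atom $m\#p\#\mathbf{s}(\mathbf{c})$ with $p\in F$ to the nullary atom $m\#p\#f\cdots f$ and fixes every other atom; extend $h$ to interpretations elementwise. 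Since $h$ is the identity on $\mathit{At}(\Pi)$ and sends magic atoms to magic atoms, it is enough to establish the single claim $h(\mathit{SM}(\Pi_M))=\mathit{SM}(\Pi')$: restricting both sides to $\mathit{At}(\Pi)$ then gives $\mathit{SM}(\Pi')\cap\mathit{At}(\Pi)=\mathit{SM}(\Pi_M)\cap\mathit{At}(\Pi)$, and in particular the two programs agree on the ground instances of $q(\mathbf{t})$.

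First I would record the structural effect of \textnormal{FullFree} and the subsumption it exploits. For each $p\in F$ the adorned predicate $\tuple{p,f\cdots f}$ is eventually selected by \textnormal{MS} (because $m\#p\#f\cdots f\in\mathit{At}(\Pi_M)$ forces $\tuple{p,f\cdots f}\in S$, and the while loop runs until $S=D$), so $\Pi_M$ contains, for every rule $r$ defining $p$, the full-free modified rule $p(\mathbf{t})\la m\#p\#f\cdots f \wedge B(r)$; its body mentions only the nullary magic atom, which \textnormal{FullFree} never deletes, so this rule survives in $\Pi'$. The restricted modified rules $p(\mathbf{t})\la m\#p\#\mathbf{s}(\mathbf{t})\wedge B(r)$ with $\mathbf{s}\neq f\cdots f$ are removed, but they are redundant modulo $h$: whenever a restricted rule fires in $\mathit{SM}(\Pi_M)$ some $m\#p\#\mathbf{s}(\mathbf{c})$ is true, hence $m\#p\#f\cdots f\in h(\mathit{SM}(\Pi_M))$, and the nullary atom then matches every substitution, so the surviving full-free rule already derives every $p$-atom obtainable from $r$ with a true body. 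Dually, \textnormal{FullFree} rewrites the heads of the seed and of the magic rules producing some $m\#p\#\mathbf{s}$ into $m\#p\#f\cdots f$, which is exactly the action of $h$ on the left-hand sides of those rules.

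The remaining work is to show that $h$ commutes with the stratified bottom-up construction of the stable model, so that collapsing the magic atoms of $\Pi_M$ yields precisely $\mathit{SM}(\Pi')$. I would proceed level by level: for rules defining original predicates and for magic rules whose head predicate is not flagged, $h$ acts trivially and the immediate-consequence step is unchanged; for the seed and the head-rewritten magic rules, applying $h$ to their consequences reproduces exactly the rewritten rules of $\Pi'$. The delicate case is a magic rule generated while processing a flagged $\tuple{p,\mathbf{s}}$ with $\mathbf{s}\neq f\cdots f$, which \textnormal{FullFree} deletes: it carries $m\#p\#\mathbf{s}$ in its body and produces a downstream magic atom $m\#p'\#\mathbf{s'}$, whereas the magic rule generated for the same $r$ while processing $\tuple{p,f\cdots f}$ produces $m\#p'\#\mathbf{s''}$, where $\mathbf{s''}$ has no bound position arising from the head and is therefore at least as free as $\mathbf{s'}$. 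A more-free adornment makes at least as many instances of $p'$ relevant, so the modified rules guarded by $m\#p'\#\mathbf{s''}$ fire whenever those guarded by $m\#p'\#\mathbf{s'}$ would; thus every downstream relevance propagated by a deleted rule is recovered in $\Pi'$, up to $h$. Combined with the soundness inherited from \textnormal{MS} (each modified rule is a restriction of an original rule, so any derived original atom lies in $\mathit{SM}(\Pi)$), this shows that $h(\mathit{SM}(\Pi_M))$ is a model of $\Pi'$ that cannot be shrunk, i.e.\ it is $\mathit{SM}(\Pi')$.

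I expect the main obstacle to be exactly this adornment-mismatch step: establishing that deleting the magic rules of the restricted flagged predicates never strips relevance from a downstream predicate, because the full-free processing of the same rule only ever yields more-free adornments, possibly under distinct magic predicates $m\#p'\#\mathbf{s'}\neq m\#p'\#\mathbf{s''}$. Making this precise requires checking that $h$ commutes with the consequence operator uniformly over seeds, magic rules, and modified rules, including the nested situation in which a deleted magic rule also feeds a second flagged predicate. A secondary technical point is to justify that the level-by-level induction is well founded, i.e.\ that the fragments of $\Pi_M$ and $\Pi'$ in play admit the unique stable model used throughout; here I would rely on the framework guaranteeing a well-defined stable model for magic-set outputs (as presupposed by Proposition~\ref{prop:correctness}) together with the observation that \textnormal{FullFree} only manipulates positively occurring magic predicates, leaving the negation and aggregate stratification over the original predicates intact.
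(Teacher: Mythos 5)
Your proposal follows essentially the same route as the paper: the paper's entire proof is the one-line assertion that $\mathit{SM}(\Pi')$ is obtained from $\mathit{SM}(\textnormal{MS}(q(\mathbf{t}),\Pi))$ by replacing every instance of $m\#p\#\mathbf{s}$ with $m\#p\#f \cdots f$, which is precisely your collapse map $h$ and your central claim $h(\mathit{SM}(\Pi_M))=\mathit{SM}(\Pi')$. You merely supply the inductive scaffolding and the case analysis (surviving full-free rules, deleted restricted rules, head-rewritten magic rules) that the paper leaves implicit.
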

\begin{proof}
Let $I$ be $\mathit{SM}(\Pi'')$.
The stable model of $\Pi'$ is obtained from $I$ by performing the following operation for all $m\#p\#f \cdots f$ occurring in $\Pi''$:
replace all instances of $m\#p\#\mathbf{s}$ by $m\#p\#f \cdots f$.
\end{proof}

\subsection{Efficiently detect subsumed rules}\label{sec:subsumption}

A rule $r$ \emph{subsumes} a rule $r'$, denoted $r \sqsubseteq r'$, if there is a substitution $\sigma$ such that $H(r)\sigma = H(r')$ and $B(r)\sigma \subseteq B(r')$.
Subsumed rules are redundant in the sense that any atom derivable from $r'$ is also derived from $r$ if $r \sqsubseteq r'$;
indeed, for any substitution $\theta$ and interpretation $I$ such that $B(r')\theta$ is ground and $I \models B(r')\theta$, it holds that $B(r)\sigma\theta$ is ground, $I \models B(r)\sigma\theta$ (because $B(r)\sigma\theta \subseteq B(r')\theta$), and $H(r)\sigma\theta = H(r')\theta$.
Hence, $r \sqsubseteq r'$ implies $\mathit{SM}(\Pi) = \mathit{SM}(\Pi \setminus \{r'\})$, and therefore all subsumed rules can be removed from a program before starting its bottom-up evaluation.
However, checking subsumption is NP-complete in general, and therefore computationally expensive if ran for all pairs of rules in a program.

\begin{algorithm}[t]
    \caption{Subsumption($\Pi$: a program)}\label{alg:subsumption}
    \ForEach{distinct $r,r' \in \Pi$ such that $\mathit{hash}(r)\ \&\ \mathit{hash}(r') = \mathit{hash}(r)$}{
        \lIf{subsumes($r,r'$)}{remove $r'$ from $\Pi$}
    }
    \Return{$\Pi$}\;
\end{algorithm}

\begin{function}[t]
    \caption{Subsumes($r$, $r'$)}\label{fun:subsumption}
    $S := [\tuple{\text{OneWayUnify}(H(r),\ H(r')),\ B(r)}]$\;
    \While{$S \neq \emptyset$}{
        $\tuple{\sigma,\ B} := S.\text{pop}()$\;
        \If{$\sigma$ is a function}{
            \lIf{$B = \emptyset$}{\Return{true}}
            \lForEach{$\ell \in B$ and $\ell' \in B(r')$}{
                $S.\text{push}(\tuple{\sigma \cup \text{OneWayUnify}(\ell,\ \ell'),\ B \setminus \{\ell\}})$%
            }
        }
    }
    \Return{false}\;
\end{function}

\begin{function}[t]
    \caption{OneWayUnify($\ell$, $\ell'$)}\label{fun:unify}
    \lIf{$\ell$ and $\ell'$ have different predicates, or are not both positive literals, negative literals, or aggregates}{\Return{$\{X \mapsto 0, X \mapsto 1\}$}}
    Let $\mathbf{t}$ and $\mathbf{t'}$ be the terms in $\ell$ and $\ell'$ (for aggregates, symbol : is considered as a constant)\;
    \lIf{$|\mathbf{t}| \neq |\mathbf{t'}|$, or $\exists i \in [1..|\mathbf{t}|]$ s.t. $\mathbf{t_\mathit{i}}$ is a constant and $\mathbf{t_\mathit{i}} \neq \mathbf{t'_\mathit{i}}$}{\Return{$\{X \mapsto 0, X \mapsto 1\}$}}
    \Return{$\{\mathbf{t_\mathit{i}} \mapsto \mathbf{t'_\mathit{i}} \mid i \in [1..|\mathbf{t}|], \mathbf{t_\mathit{i}}$ is a variable$\}$}\tcp*{possibly a function}
\end{function}

The number of performed checks is significantly reduced by means of an hash function that associates each rule with a bit string of fixed length and satisfying the following invariant:
\begin{align}\label{eq:hash}
\text{if } \mathit{hash}(r)\ \&\ \mathit{hash}(r') \neq \mathit{hash}(r), \text{ then } r \not\sqsubseteq r'
\end{align}
where $\&$ is the bitwise \textsc{and} operator.
Specifically, the hash value associated with a rule is designed to be a string of 64 bits computed as follows from the less significant bits of predicate ids and constant ids occurring in the rule:
8 bits for the bitwise \textsc{or} of predicate ids in $H(r)$ (only one predicate for the language considered in this paper);
8 bits for the bitwise \textsc{or} of constants ids in $H(r)$;
16 bits for the bitwise \textsc{or} of predicate ids in $B^+(r) \cup B^A(r)$;
16 bits for the bitwise \textsc{or} of constant ids in $B^+(r) \cup B^A(r)$;
8 bits for the bitwise \textsc{or} of predicate ids in $B^-(r)$;
8 bits for the bitwise \textsc{or} of constants ids in $B^-(r)$.

The idea underlying the above hash function is that all constants and predicates occurring in $H(r)$, $B^+(r) \cup B^A(r)$ and $B^-(r)$ have to also occur in $H(r')$, $B^+(r') \cup B^A(r')$ and $B^-(r')$ in order to have $r \sqsubseteq r'$.
The invariant (\ref{eq:hash}) eventually detects pairs of rules not satisfying this property, so to avoid the more expensive backtracking procedure for them.
Algorithm~\ref{alg:subsumption} summarizes the strategy implemented for removing subsumed rules from programs:
when the condition on the hash values is satisfied, use backtracking to build a substitution $\sigma$.

\begin{example}
Consider the following rules from the introduction:
\begin{asp}
  $r:$ q(X) :- p(X,Y). $\qquad r':$ q(X) :- p(X,a). $\qquad r'':$ q(X) :- p(X,Y), t(X).
\end{asp}
and the following predicate and constant ids:
$\mathit{id}(q) = 01, \mathit{id}(p) = 10, \mathit{id}(t) = 11, \mathit{id}(a) = 01$, where for simplicity only 2 bits are used.
The hash values of the rules above (using only 2 bits for each portion of the hash value) are the following:
\begin{itemize}
\item $\mathit{hash}(r\phantom{''}) = 01 00 10 00 00 00$;
\item $\mathit{hash}(r'\phantom{'}) = 01 00 10 01 00 00$;
\item $\mathit{hash}(r'') = 01 00 11 00 00 00$.
\end{itemize}
Note that $\mathit{hash}(r')\ \&\ \mathit{hash}(r'') = 01 00 10 00 00 00 \neq \mathit{hash}(r')$, and in fact $r' \not\sqsubseteq r''$.
On the other hand, $\mathit{hash}(r)\ \&\ \mathit{hash}(r') = 01 00 10 00 00 00 = \mathit{hash}(r)$, and $r \sqsubseteq r'$.
\hfill$\blacksquare$
\end{example}

\begin{restatable}{theorem}{ThmSubsumption}
Invariant (\ref{eq:hash}) is satisfied by the proposed hash function.
\end{restatable}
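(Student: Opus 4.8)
The plan is to prove the contrapositive of invariant~(\ref{eq:hash}): assuming $r \sqsubseteq r'$, I would show that $\mathit{hash}(r)\ \&\ \mathit{hash}(r') = \mathit{hash}(r)$, i.e.\ that $\mathit{hash}(r)$ is a \emph{submask} of $\mathit{hash}(r')$ (every bit set in the former is also set in the latter). Since the 64-bit string is partitioned into six disjoint bit ranges, and bitwise \textsc{and} acts on each bit independently, it suffices to establish the submask relation separately on each of the six portions.

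By definition of subsumption there is a substitution $\sigma$ with $H(r)\sigma = H(r')$ and $B(r)\sigma \subseteq B(r')$. The first fact I would record is that $\sigma$ only replaces variables: it leaves every predicate unchanged, leaves every constant in place, and preserves the classification of each literal, since a positive literal maps to a positive literal, a negative literal to a negative literal, and an aggregate to an aggregate (the inner predicate and the comparator of an aggregate are untouched). Consequently $B^+(r)\sigma \subseteq B^+(r')$, $B^-(r)\sigma \subseteq B^-(r')$, and $B^A(r)\sigma \subseteq B^A(r')$, whence $(B^+(r)\cup B^A(r))\sigma \subseteq B^+(r')\cup B^A(r')$.

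From these containments I would derive the six set inclusions that drive the proof. For each portion the predicates (resp.\ constants) occurring in the relevant part of $r$ form a subset of those occurring in the corresponding part of $r'$: for the head this is immediate, as $H(r)$ and $H(r')$ share the same predicate and every constant of $H(r)$ survives in $H(r)\sigma = H(r')$; for the positive/aggregate part and the negative part it follows because each $\ell$ maps to some $\ell\sigma$ lying in the matching part of $r'$ and carrying the same predicate together with all of $\ell$'s constants. Applying the id function preserves these inclusions, since the image of a subset is a subset, regardless of id collisions or of truncation to the less significant bits. I would then invoke the monotonicity of bitwise \textsc{or}: if a set $S$ of bit patterns is contained in a set $S'$, then $\bigvee S$ is a submask of $\bigvee S'$, because $\bigvee S'$ is the bitwise \textsc{or} of $\bigvee S$ and $\bigvee(S'\setminus S)$. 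This yields the submask relation on each of the six portions, and assembling them gives $\mathit{hash}(r)\ \&\ \mathit{hash}(r') = \mathit{hash}(r)$, as required.

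The argument is essentially bookkeeping, so I expect no deep obstacle; the only point demanding care is the claim that $\sigma$ preserves literal classification and carries the predicates and constants of $r$ into the correct syntactic category of $r'$. The delicate case is that of aggregates, where one must verify that the inner predicate, the guard constants, and the constants in the aggregated terms are all left untouched by $\sigma$ so that they indeed contribute to the $B^+(r')\cup B^A(r')$ portion. Once that is settled, the monotonicity of \textsc{or} and the disjointness of the six bit ranges make the conclusion routine.
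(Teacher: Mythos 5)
Your proposal is correct and rests on the same idea as the paper's proof: subsumption forces every predicate and constant id contributing to a given portion of $\mathit{hash}(r)$ to also contribute to the corresponding portion of $\mathit{hash}(r')$, so by monotonicity of bitwise \textsc{or} the former is a submask of the latter. The paper merely argues in the other logical direction (from a mismatching bit $i$ it extracts a predicate or constant of $r$ absent from the corresponding part of $r'$, hence $r \not\sqsubseteq r'$) and treats only the two head portions explicitly, whereas you spell out the body cases, including the preservation of literal classification under $\sigma$ that the paper leaves implicit in ``the remaining cases are similar.''
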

\begin{proof}
Let $\mathit{hash}(r)\ \&\ \mathit{hash}(r') \neq \mathit{hash}(r)$.
Hence, there is $i \in [1..64]$ such that $\mathit{hash}(r) = 1$ and $\mathit{hash}(r') = 0$.
If $i \in [1..8]$, then predicate ids of $H(r)$ and $H(r')$ disagree on their less significant 8 bits, and therefore they are necessarily different predicates;
thus, $r \not\sqsubseteq r'$ holds.
If $i \in [9..16]$, then $H(r)$ contains a constant whose $(i-8)$-th less significant bit is 1, while no constant in $H(r')$ has this property;
it turns out that $H(r)$ has a constant not occurring in $H(r')$, and therefore $r \not\sqsubseteq r'$ holds also in this case.
The remaining cases are similar.
\end{proof}

\section{Experiment}\label{sec:exp}

The proposed enhancements are implemented in \textsc{i-dlv}~1.1.4, and compared against the performance of the previous magic sets rewriting implemented in \textsc{i-dlv}~1.1.3.
Binaries are available at \url{https://github.com/DeMaCS-UNICAL/I-DLV/releases}.
The experiment comprises synthetic instances from Example~\ref{ex:cycles} with facts \lstinline|edb(0..1000000*size)|, where \lstinline|size| ranges in $[1..10]$, to show the potential impact of the prevention of new cycles.
Additional instances are obtained from LUBM (\url{http://swat.cse.lehigh.edu/projects/lubm/}) by generating instances with \lstinline|50*size| universities, where \lstinline|size| ranges in $[1..20]$, with the aim to measure the impact of the hashing technique to prevent subsumption checks.
All tests were run on a Dell server with 8 CPU Intel Xeon Gold 6140 2.30GHz, RAM 297GB, and HDD 3.29TB 7200rpm.
Each test was limited to 1200 seconds of execution time and 250GB of memory consumption.

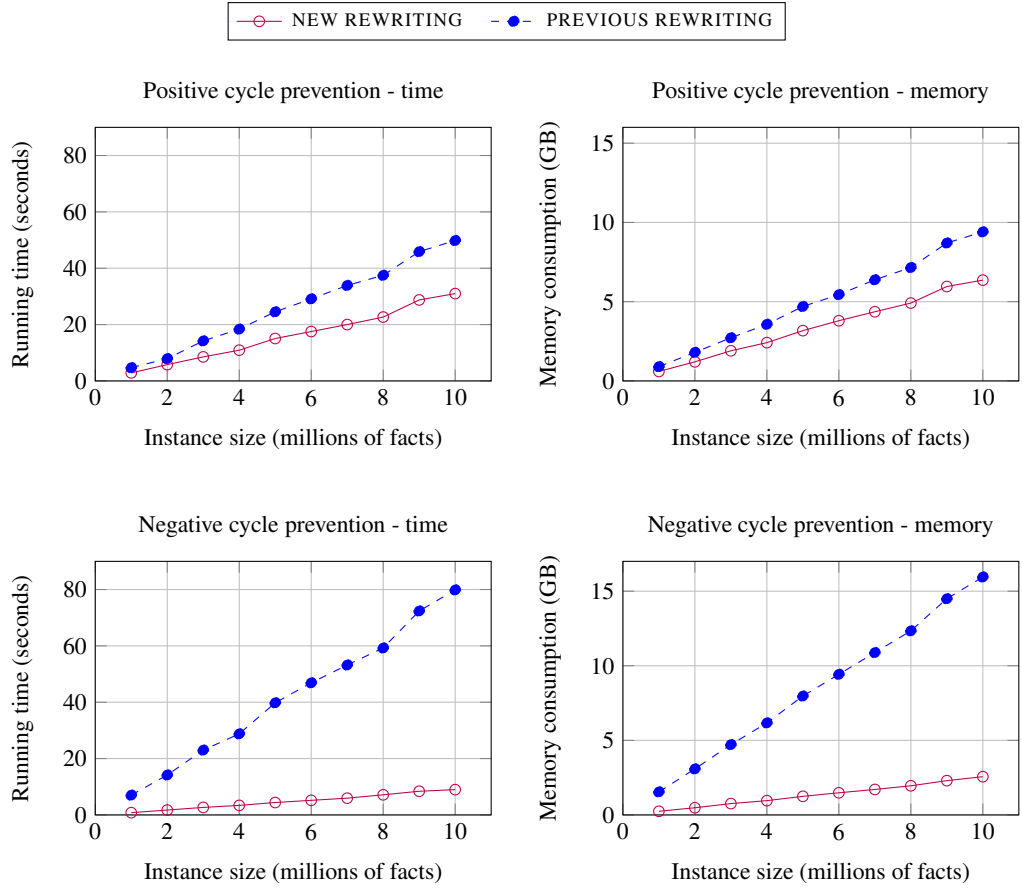
\begin{figure}[t]
    \figrule
    \begin{center}
        \begin{tikzpicture}[scale=1]
            \begin{axis}[
                hide axis,
                scale only axis,
                width=.5\textwidth,
                height=0.1\textwidth,
                xmin=10,
                xmax=50,
                ymin=0,
                ymax=0.4,
                legend style={align=left, legend columns=-1, legend style={column sep=1em}}
            ]
                \addlegendimage{mark size=2pt, color=purple, mark=o}
                \addlegendentry{\textsc{\!\!\!\!\!new rewriting}}
    
                \addlegendimage{mark size=2pt, color=blue, mark=*, dashed}
                \addlegendentry{\textsc{\!\!\!\!\!previous rewriting}}
            \end{axis}
        \end{tikzpicture}
    \end{center}
    \vspace{-2em}
    \begin{tikzpicture}[scale=1]
        \begin{axis}[
            scale only axis,
            width=0.39\textwidth,
            height=0.25\textwidth,
            y label style = {at={(axis description cs:0.05,0.5)}},
            xlabel={Instance size (millions of facts)},
            ylabel={Running time (seconds)},
            xmin=0,
            xmax=11,
            ymin=0,
            ymax=90,
            grid=both,
            title={Positive cycle prevention - time},
        ]
            \addplot [mark size=2pt, color=purple, mark=o] [unbounded coords=jump] table[col sep=tab, skip first n=1, x index=0, y index=1] {scalability-positive.csv};

            \addplot [mark size=2pt, color=blue, mark=*, dashed] [unbounded coords=jump] table[col sep=tab, skip first n=1, x index=0, y index=2] {scalability-positive.csv};
        \end{axis}
    \end{tikzpicture}
    \hfill
    \begin{tikzpicture}[scale=1]        
        \begin{axis}[
            scale only axis,
            width=0.39\textwidth,
            height=0.25\textwidth,
            y label style = {at={(axis description cs:0.05,0.5)}},
            xlabel={Instance size (millions of facts)},
            ylabel={Memory consumption (GB)},
            xmin=0,
            xmax=11,
            ymin=0,
            ymax=16,
            grid=both,
            title={Positive cycle prevention - memory},
        ]
            \addplot [mark size=2pt, color=purple, mark=o] [unbounded coords=jump] table[col sep=tab, skip first n=1, x index=0, y index=3] {scalability-positive.csv};

            \addplot [mark size=2pt, color=blue, mark=*, dashed] [unbounded coords=jump] table[col sep=tab, skip first n=1, x index=0, y index=4] {scalability-positive.csv};
        \end{axis}
    \end{tikzpicture}    

    \vspace{2em}

    \begin{tikzpicture}[scale=1]
        \begin{axis}[
            scale only axis,
            width=0.39\textwidth,
            height=0.25\textwidth,
            y label style = {at={(axis description cs:0.05,0.5)}},
            xlabel={Instance size (millions of facts)},
            ylabel={Running time (seconds)},
            xmin=0,
            xmax=11,
            ymin=0,
            ymax=90,
            grid=both,
            title={Negative cycle prevention - time},
        ]
            \addplot [mark size=2pt, color=purple, mark=o] [unbounded coords=jump] table[col sep=tab, skip first n=1, x index=0, y index=1] {scalability-negative.csv};

            \addplot [mark size=2pt, color=blue, mark=*, dashed] [unbounded coords=jump] table[col sep=tab, skip first n=1, x index=0, y index=2] {scalability-negative.csv};
        \end{axis}
    \end{tikzpicture}
    \hfill
    \begin{tikzpicture}[scale=1]        
        \begin{axis}[
            scale only axis,
            width=0.39\textwidth,
            height=0.25\textwidth,
            y label style = {at={(axis description cs:0.05,0.5)}},
            xlabel={Instance size (millions of facts)},
            ylabel={Memory consumption (GB)},
            xmin=0,
            xmax=11,
            ymin=0,
            ymax=17,
            grid=both,
            title={Negative cycle prevention - memory},
        ]
            \addplot [mark size=2pt, color=purple, mark=o] [unbounded coords=jump] table[col sep=tab, skip first n=1, x index=0, y index=3] {scalability-negative.csv};

            \addplot [mark size=2pt, color=blue, mark=*, dashed] [unbounded coords=jump] table[col sep=tab, skip first n=1, x index=0, y index=4] {scalability-negative.csv};
        \end{axis}
    \end{tikzpicture}

    \caption{Scalability results with respect to time (left) and memory (right)}\label{fig:scalability}
    \figrule
\end{figure}

Concerning the scalability tests, time and memory usage are plotted on Figure~\ref{fig:scalability}.
For program $\Pi_1$, the execution time of the new rewriting is around 62\% of the execution time of the previous rewriting on average;
similarly, the new rewriting only used around 68\% of the memory required by the previous rewriting.
These results confirm that the proposed restriction to SIPS may lead computational advantages also for positive programs.
The advantage is much more evident for program $\Pi_2$, that is, the one for which negative cycles may be introduced by magic sets.
Indeed, in this case the new rewriting only needs around 12\% of the execution time and around 16\% of the memory required by the previous rewriting on average.
Finally, concerning program $\Pi_3$, the previous rewriting could not be tested because it introduced recursive aggregates, as expected;
the new rewriting, instead, performed as for $\Pi_2$, with an average execution time of 4.5 seconds and an average memory consumption of 1.4 GB.

As for LUBM, its Datalog encoding consists of 132 rules and 83 predicate names, which become 382 rules and 216 predicate names after running magic sets as implemented in \textsc{i-dlv} 1.1.3.
Within \textsc{i-dlv} 1.1.4, instead, the magic sets rewriting comprises 210 rules and 118 predicate names.
In fact, several rules and predicate names are removed because of full-free adornments.
A few additional rules, precisely 17, are filtered out by the subsumption checks.
Within this respect, it is interesting to observe that the number of subsumption checks to perform without the hashing technique presented in Section~\ref{sec:subsumption} is 37\,600, in contrast to a significantly smaller number of 1\,159 checks actually performed;
the hashing technique reduced by around 97\% the number of subsumption checks.
Finally, concerning execution time, both versions scale almost linearly, with a slight advantage of the new magic sets:
\textsc{i-dlv} 1.1.3 reported an average execution time of around 529 seconds, with a minimum of around 42 seconds and a maximum of around 1060;
\textsc{i-dlv} 1.1.4 reported an average execution time of around 502 seconds, with a minimum of around 40 seconds and a maximum of around 1027.


\section{Related work}\label{sec:rw}

Magic sets were originally introduced for Datalog programs \cite{DBLP:conf/pods/BancilhonMSU86}, and applied among other contexts to bottom-up analysis of logic programs \cite{DBLP:journals/jlp/CodishD95} and BDD-Based Deductive DataBases \cite{DBLP:conf/aplas/WhaleyACL05}.
Extending the technique to Datalog programs with stratified negation was nontrivial, as the perfect model semantics is not applicable to the rewritten program if recursive negation is introduced by magic sets.
Several semantics were considered in the literature to overcome the limitation of perfect model semantics.
Among them, some authors defined ad-hoc semantics for rewritten programs \cite{DBLP:conf/fgcs/KerisitP88,DBLP:journals/jlp/BalbinPRM91,DBLP:conf/pods/Behrend03}, while \citeANP{DBLP:journals/tcs/KempSS95}~\citeyear{DBLP:journals/tcs/KempSS95} and \citeANP{DBLP:journals/jacm/Ross94}~\citeyear{DBLP:journals/jacm/Ross94} considered well-founded semantics, and showed that the well-founded model of any rewritten program obtained from a Datalog program with stratified negation is two-valued.

A similar semantic issue arises for aggregations \cite{DBLP:conf/vldb/MumickPR90,DBLP:journals/is/FurfaroGGZ02}, as there is no general consensus for recursive aggregates \cite{DBLP:journals/ki/AlvianoF18}.
This fact explains why \textsc{dlv} did not support (dynamic) magic sets \cite{DBLP:journals/ai/AlvianoFGL12} for programs with aggregates, even if their correctness was shown also for programs with some form of aggregation \cite{DBLP:conf/lpnmr/AlvianoGL11}.
In fact, even if techniques to process programs with recursive aggregates are known \cite{DBLP:conf/iclp/GebserKKS09,DBLP:journals/tplp/AlvianoFG15,DBLP:conf/ijcai/AlvianoFG16}, they are in general less efficient than those for stratified aggregates;
for example, \emph{shared aggregate sets} \cite{DBLP:journals/tplp/AlvianoDM18} are currently implemented in \textsc{wasp} \cite{DBLP:conf/cilc/DodaroAFLRS11,DBLP:conf/lpnmr/AlvianoADLMR19} only in the stratified case.

Magic sets were applied to other extensions of Datalog, in particular to disjunctive Datalog under stable model semantics \cite{DBLP:journals/tkde/Greco03,DBLP:journals/tplp/GrecoGTZ05}.
For disjunctive Datalog, dynamic magic sets push the optimization on all phases of the computation of stable models \cite{DBLP:journals/ai/AlvianoFGL12}, and are shown to be correct for a semantic class known as \emph{super-coherent} programs \cite{DBLP:journals/aicom/AlvianoF11,DBLP:journals/tplp/AlvianoFW14}.
The restriction on SIPS applied in Section~\ref{sec:cycles} necessarily limits the optimization of dynamic magic sets to the grounding phase, which is anyhow the only computation phase for the language considered in this paper.
On the other hand, the restriction on SIPS presented in this paper does not inhibit the application of magic sets to programs characterized by multiple stable models:
magic sets would still optimize the grounding of those programs, so that other highly optimized techniques for computing \emph{cautious consequences} of propositional programs can be employed \cite{DBLP:journals/tplp/AlvianoDR14,DBLP:journals/tplp/AlvianoDJM18}, among them those based on \emph{unsatisfiable core analysis} \cite{DBLP:journals/tplp/AlvianoD16,DBLP:conf/ijcai/AlvianoD17}.

\section{Conclusion}\label{sec:conclusion}

Magic sets aim at optimizing query answering, but they may introduce recursive definitions that possibly deteriorate the performance of a bottom-up evaluation of the rewritten program.
Previous works in the literature noted the problem for programs with stratified negation, and proposed several solutions to the associated semantic issue.
By imposing some restriction on SIPS, this paper provides a simple solution to semantic issues arising for programs with stratified negation and aggregations, which also inhibits the creation of new positive recursive definitions (Section~\ref{sec:cycles}).
The role of magic atoms is to restrict the range of variables in the original rules of the processed program.
When all arguments of a predicate have to be considered free, a full-free adornment is generated.
Any other adornment associated with such a predicate only introduces overhead in the evaluation of the rewritten program.
This paper proposes a post-processing of the rewritten program to purge full-free adornments, in contrast to more complex unroll procedures (Section~\ref{sec:free}).
Further overhead is associated with subsumed rules.
Their identification is nontrivial and addressed by a backtracking algorithm.
Even if there are few branching points, actually only if there are multiple occurrences of the same predicate in rule bodies, running the backtracking algorithm for all pairs of rules in the rewritten program is expensive.
The hashing technique given in Section~\ref{sec:subsumption} provides a drastic reduction on the number of checks.

\section*{Acknowledgments}

This work has been partially supported by MIUR under project ``Declarative Reasoning over Streams'' (CUP H24I17000080001) -- PRIN 2017, by MISE under project ``S2BDW'' (F/050389/01-03/X32) -- ``Horizon2020'' PON I\&C2014-20, by Regione Calabria under project ``DLV LargeScale'' (CUP J28C17000220006) -- POR Calabria 2014-20, and by GNCS-INdAM.

\bibliographystyle{acmtrans}
\bibliography{bibtex}

\label{lastpage}
\end{document}